\newcommand{\vp}{\bm{p}}
\newcommand{\vs}{\bm{s}}               
\newcommand{\vt}{\bm{t}}               
\newcommand{\vu}{\bm{u}}               
\newcommand{\vw}{\bm{w}}       \newcommand{\vwh}{\hat{\bm{w}}}        
\newcommand{\vx}{\bm{x}}               
\newcommand{\vy}{\bm{y}}       \newcommand{\vyh}{\hat{\bm{y}}}        \newcommand{\yh}{\hat{y}}
\newcommand{\vepsilon}{\bm{\epsilon}}     
\newcommand{\vphi}    {\bm{\phi}}
\newcommand{\mi}{\bm{I}}
    \newcommand{\Nc}{\mathcal{N}}  
    \newcommand{\Pc}{\mathcal{P}}
  \newcommand{\Sc}{\mathcal{S}}
    \newcommand{\Xc}{\mathcal{X}}  
    \newcommand{\Yc}{\mathcal{Y}}
\newcommand{\R}{\mathbb{R}}
\newcommand{\E}{\mathbb{E}}
\renewcommand{\P}{\mathbb{P}}
\newcommand{\I}{\mathbb{I}}
\newcommand{\reals}{\mathbb{R}}
\DeclareMathOperator*{\argmin}{argmin}
\DeclareMathOperator*{\argmax}{argmax}
\newtheorem{theorem}{Theorem}
\newtheorem{lemma}[theorem]{Lemma}
\newcommand{\eq}[1]{(\protect\ref{#1})}
\begin{document}

%

%
\title{Risk Minimization in Structured Prediction using Orbit Loss}
\author{
    Danny Karmon\\
    Dept. of Computer Science\\
    Bar-Ilan University, Israel\\
    danny.karmon@biu.ac.il
  \and
    Joseph Keshet\\
    Dept. of Computer Science\\
    Bar-Ilan University, Israel\\
    joseph.keshet@biu.ac.il
}
\maketitle

\begin{abstract}
We introduce a new surrogate loss function called orbit loss in the structured prediction framework, which has good theoretical and practical advantages. While the orbit loss is not convex, it has a simple analytical gradient and a simple perceptron-like learning rule. We analyze the new loss theoretically and state a PAC-Bayesian generalization bound. We also prove that the new loss is consistent in the strong sense; namely, the risk achieved by the set of the trained parameters approaches the infimum risk achievable by any linear decoder over the given features. Methods that are aimed at risk minimization, such as the structured ramp loss, the structured probit loss and the direct loss minimization require at least two inference operations per training iteration. In this sense, the orbit loss is more efficient as it requires only one inference operation per training iteration, while yields similar performance. We conclude the paper with an empirical comparison of the proposed loss function to the structured hinge loss, the structured ramp loss, the structured probit loss and the direct loss minimization method on several benchmark datasets and tasks.
\end{abstract}


\section{Introduction}

There are three main differences between binary classification problems and structured prediction problems. First, the input to a binary classifier is a feature vector of a fixed length and the output is restricted to two possible labels, whereas in structured prediction both the input and the output are structured objects (a graph, an acoustic speech utterance, a sequence of words, an image). Second, the structured output space is potentially exponentially large (all possible phoneme or word sequences, all possible taxonomy graphs, all possible human poses, etc.). And third, while in binary classification the system's performance is evaluated using the error rate, i.e., 0-1 loss, in structured prediction each task often has its own evaluation metric or cost, such as word error rate in speech recognition, the BLEU score in machine translation, the NDCG score in information retrieval, or the intersection-over-union score in visual object segmentation. Some of these are involved functions, which are non-decompostable in the output space.


There is significant literature on learning parameters for structured prediction and graphical models. Ultimately, the goal in learning is to find the model parameters so as to minimize the expected cost, or risk, where the expectation is taken with respect to a random draw of input-output pairs from a fixed but unknown distribution. Since the expectation cannot be evaluated because the underlying probability is unknown, and since the cost is often a non-convex combinatorial function (which is hard to minimize directly), the learning problem is formulated as an optimization problem where the parameters are found by minimizing a trade-off between a measure of the goodness of fit (loss) to the training data and a regularization term. In discriminative training, the loss function should be directly related to the cost between the model prediction and the target label, averaged over the training set. 

The most common approaches to structured prediction, namely structured perceptron, structural support vector machine (SVM) and conditional random fields (CRF), do not directly minimize the risk. The structured perceptron \citep{Collins02} solves a feasibility problem, which is independent of the cost. In structural SVM \citep{TsochantHoJoAl05} the measure of goodness is a convex upper bound to the cost called structural hinge loss. It is based on a generalization of the binary SVM hinge loss to the structured case, and there is no guarantee for the risk. While there exists generalization bounds for the structured hinge loss \citep[e.g., ][]{McAllester06,TaskarGuKo03}, they all include terms which are not directly related to the cost, such as the Hamming loss, and inherently the structured hinge loss cannot be consistent as it fails to converge to the performance of the optimal linear predictor in the limit of infinite training data \citep{McAllester06}. In CRFs the measure of goodness is the log loss function, which is independent of the cost \citep{LaffertyMcPer01}. \cite{SmithEi06} tried to address this shortcoming of CRFs and proposed to minimize the risk under the Gibbs measure. While it seems that this loss function is consistent, we are not aware of any formal analysis.

Recently, several works have focused on directly minimizing the expected cost. In particular, \citet{McAllesterHaKe10} presented a theorem stating that a certain perceptron-like learning rule, involving feature vectors derived from cost-augmented inference, directly corresponds to the gradient of the risk. Direct loss needs two inference operations per training iteration and is extremely sensitive to its hyper-parameter. \citet{DoLeTeChSm08} generalized the notion of the ramp loss from binary classification to structured prediction and proposed a loss function, which is a non-convex bound to the cost, and was found to be a tighter bound than the structured hinge loss function. The structured ramp loss also needs two inference operations per training iteration. \citet{KeshetMcHa11} generalized the notion of the binary probit loss to the structured prediction case. The gradient of this non-convex loss function can be approximated by averaging over samples from the unit-variance isotropic normal distribution, where for each sample an inference with a perturbed weight vector is computed. In order to gain stability in the gradient computation, hundreds to thousands of inference operations are required per training iteration, hence the update rule is computationally-heavy.

The goal of this work is to propose a new learning update rule for structured prediction which results in fast training on one hand and aims at minimizing the risk on the other hand. We define a new loss function, called \emph{orbit}, where its gradient has a close simple analytical form, which is very close to the structured perceptron update rule. We state a finite sample generalization bound for this loss function and show that it is consistent in the strong sense. That is, for any feature map (finite or infinite dimensional) the loss function yields predictors approaching the infimum risk achievable by any linear predictor over the given features. The update rule of this new loss involves one inference operation per training iteration, similar to the structured perceptron or the structural SVM, and hence faster (per training iteration) than ramp, probit and direct loss minimization. In a series of experiments we showed that the new loss function performs similar to other approaches that were designed to minimize the risk. 

The paper is organized as follows. In Section~\ref{sec:problem_setting} we state the problem formally. In Section~\ref{sec:orbit} we introduce the new surrogate loss function and its update rule. In Section~\ref{sec:analysis} we present the analysis for our new methods, including proofs for both consistency and generalization bound. In Section~\ref{sec:experiments} we present a set of experiments and compare the new learning rule to other algorithms. We conclude the paper in Section~\ref{sec:conclusion}.


\section{Formal Settings}\label{sec:problem_setting}

We formulate the structured supervised learning problem by setting $\Xc$ to be an abstract set of all possible input objects and $\Yc$ to be an abstract set of all possible output targets. We assume that the input objects $\vx\in\Xc$ and the target labels $\vy \in \Yc$ are drawn from an unknown joint distribution $\rho$. We define a set of $d$ fixed mappings $\vphi: \Xc \times \Yc \rightarrow \reals^d$ called \emph{feature functions} from the set of input objects and target labels to a real vector of length $d$.

Here we consider a linear decoder with parameters $\vw \in \reals^d$, such that the parameters weight the feature functions. We denote the score of label $\vy$ by $\vw\cdot\vphi(\vx,\vy)$, given the input $\vx$. The decoder predicts the label $\vyh_{\vw}$ with the highest score:
\begin{equation}
\label{eq:yw}
\vyh_{\vw}(\vx) = \argmax_{\vy \in \Yc} ~ \vw\cdot \vphi(\vx, \vy)
\end{equation} 
Ideally, we would like to find the parameters $\vw$ that optimize the risk for unseen data. Formally, we define the \emph{cost} function, $\ell(\vy, \vyh_{\vw})$, to be a non-negative measure of error when predicting $\vyh_{\vw}$ instead of $\vy$ as the label of $\vx$. We assume that $\ell(\vy,\vy)=0$ for all $\vy$. Often the desired evaluation metric is a utility function that needs to be maximized (like BLEU or NDCG) and then we define the cost to be 1 minus the evaluation metric.  

Our goal is to minimize the risk:
\begin{equation}
\label{eq:w*}
\vw^* = \argmin_{\vw} ~ \mathbb{E}_{(\vx,\vy) \sim \rho} [\ell(\vy,\vyh_{\vw}(\vx))]. 
\end{equation}
Since the distribution $\rho$ is unknown, we use a training set $\Sc=\{(\vx_i,\vy_i)\}_{i=1}^m$ of $m$ examples that are drawn i.i.d. from $\rho$, and replace the expectation in \eq{eq:w*} with a mean over the training set and a regularization factor $\frac{1}{2}\|\vw\|^2$. The cost is often a combinatorial non-convex quantity, which is hard to minimize, hence it is replaced with a \emph{surrogate loss}, denoted $\bar{\ell}(\vw,\vx,\vy)$. Different algorithms use different surrogate loss functions. Overall the objective function of \eq{eq:w*} transforms into the following objective function
\begin{equation}
\label{eq:reg-loss}
\vw^* = \argmin_{\vw}  ~ \frac{1}{m}\sum_{i=1}^{m} \bar{\ell}(\vw,\vx_i,\vy_i) + \frac{\lambda}{2} \|\vw\|^2, 
\end{equation} 
where $\lambda$ is a trade-off parameter between the loss term and the regularization factor.


\section{Orbit Loss}\label{sec:orbit}

Denote by $\Delta\vphi(\vy,\vy')$ the difference between the feature functions of the labels $\vy, \vy' \in \Yc$, respectively: 
$$
\Delta\vphi(\vy,\vy')=\vphi(\vx,\vy)-\vphi(\vx,\vy').
$$
Define by $\delta\vphi(\vy,\vy')$ the normalized version of $\Delta\vphi(\vy,\vy')$ as follows:
\begin{equation}
\delta\vphi(\vy,\vy') = 
\left\{ 
\begin{array}{ll}
 {\Delta\vphi(\vy,\vy')}/{\|\Delta\vphi(\vy,\vy')\|} & \textrm{if $\vy \ne \vy'$}\\
 \boldsymbol{0} & \textrm{if $\vy = \vy'$}
\end{array} \right. .
\end{equation}

The \emph{orbit} surrogate loss function is formally defined as follows:
\begin{equation}\label{eq:orbit_loss_def}
\bar{\ell}_{orbit}(\vw,\vx,\vy)= 
\P_{\varepsilon\sim\Nc(0, 1)}
\Big[\varepsilon > \vw\cdot\delta\vphi(\vy,\vyh_{\vw})\Big] \, \ell(\vy, \vyh_{\vw}).
\end{equation}
That is, the orbit loss is equal to the cost multiplied by the probability that the prediction score $\vw\cdot\vphi(\vx,\vyh_{\vw})$ plus a small number $\varepsilon$ is greater than the score of the target label $\vw\cdot\vphi(\vx,\vy)$.

We now derive the gradient-based learning rule for this loss function, which helps to describe some of its properties. The loss has a simple analytical gradient:
\begin{align}
\nabla_{\vw} \Big[
\P_{\varepsilon\sim\Nc(0, 1)}
	\left[ \varepsilon > \vw\cdot\delta\vphi(\vy,\vyh)\right] \, \ell(\vy, \vyh) \Big] 
&=	\nabla_{\vw} \left[ 
\frac{1}{\sqrt{2\pi}} \int_{\vw\cdot\delta\vphi(\vy,\vyh)}^{\infty} e^{{-z^2}/{2}} dz \, \ell(\vy,\vyh)  
\right] \\ 
&=  -\frac{1}{\sqrt{2\pi}}  e^{{-|\vw\cdot\delta\vphi(\vy,\vyh)|^2}/{2}} \ell(\vy,\vyh) \, \delta\vphi(\vy,\vyh).
\end{align}
The update rule of the orbit loss is the following:
\begin{equation}\label{eq:orbit_update_rule}
	\vw ~\gets~ (1-\eta\lambda)\,\vw  ~+~ \eta \, e^{{-|\vw\cdot\delta\vphi(\vy,\vyh_{\vw})|^2}/{2}} \, \ell(\vy,\vyh_{\vw}) \, \delta\vphi(\vy,\vyh_{\vw}).
\end{equation}

Note that when the prediction label $\vyh_{\vw}$ is close to the target label $\vy$ in terms of the decoding score, that is, when the term $\vw\cdot\delta\vphi(\vyh_{\vw},\vy)$ is relatively small, the exponent is close to 1. Under this condition the update rule becomes
\begin{equation}\label{eq:orbit_update_rule2}
	\vw \gets (1-\eta\lambda)\,\vw ~+~ \eta \, \ell(\vy,\vyh) \, \delta\vphi(\vy,\vyh_{\vw}),
\end{equation}
which generalizes the regularized structured perceptron's update rule \citep{Collins02,zhang2014regularized}. Namely
	\begin{equation}\label{eq:perceptron_update_rule}
		\vw \gets (1-\eta\lambda)\,\vw + \eta \, \mathbbm{1}\{\vy \neq \vyh_{\vw}\} \, \delta\vphi(\vy,\vyh_{\vw}),
	\end{equation} 
where $\mathbbm{1}\{\pi\}$ is an indicator function, equals 1 if the predicate $\pi$ holds and equals 0 otherwise. 

A nice property of this update rule is that the cost function does not need to be decomposable in the size of the output. Decomposable cost functions are needed in order to solve the cost-augmented inference that is used in the training of structural SVMs  \citep{TsochantHoJoAl05,ranjbar2013optimizing}, direct loss minimization  \citep{McAllesterHaKe10}, or structured ramp loss \citep{DoLeTeChSm08}. It means that cost functions like word error rate or intersection-over-union can be used without being approximated. 

Another property of the orbit loss is its similarity to the structured probit loss \citep{KeshetMcHa11}. The probit loss was derived from the concept of stochastic decoder in the PAC-Bayesian framework \citep{McAllester98,McAllester03} and was shown to have both good theoretical properties and practical advantages \citep{KeshetMcHa11}. The structured probit loss is defined as follows
\begin{equation}
\bar{\ell}_{probit}(\vw,\vx,\vy) = \E_{\vepsilon\sim\Nc(\boldsymbol{0}, \mi)} \left[ \ell(\vy, \vyh_{\vw+\vepsilon}) \right],
\end{equation}
where $\vepsilon \in \R^d$ is a $d$-dimensional isotropic Normal random vector. Note that the orbit loss \eqref{eq:orbit_loss_def} can be written as follows:
\begin{multline}
\P_{\varepsilon\sim\Nc(0, 1)}
\left[\varepsilon > \vw\cdot\delta\vphi(\vy,\vyh_{\vw})\right] \, \ell(\vy, \vyh_{\vw}) 
\\ 
=  \P_{\vepsilon\sim\Nc(\boldsymbol{0}, \mi)}
\left[-\vepsilon\cdot\delta\vphi(\vy,\vyh_{\vw}) > \vw\cdot\delta\vphi(\vy,\vyh_{\vw})\right] \, \ell(\vy, \vyh_{\vw}). 
\end{multline}
The last equation holds since the inner product of an isotropic normal random vector $\vepsilon$ with a unit-norm vector $\delta\vphi(\vy,\vyh)$ is a zero-mean unit variance normal random variable. Writing the probability as the expectation of an indicator function, we have  
\begin{equation}
\bar{\ell}_{orbit}(\vw,\vx,\vy) 
= \E_{\vepsilon\sim\Nc(\boldsymbol{0}, \mi)}  \Big[ \I\left\{
(\vw+\vepsilon)\cdot\delta\vphi(\vy,\vyh_{\vw})\! <\! 0\right\} \Big] \,\ell(\vy, \vyh_{\vw}).
\end{equation}
Assuming $\vyh_{\vw+\vepsilon}=\vyh_{\vw}$ for a value $r$ small enough that $\vepsilon \in B(\boldsymbol{0},r)$, where $B(\boldsymbol{0},r)$ is a ball of radius $r$ centered at $\boldsymbol{0}$, we can bring the cost function into the expectation term, that is
\begin{equation} \label{eq:expectation_to_orbit}
		\E_{\vepsilon\sim\Nc(\boldsymbol{0}, \mi)} \left[ \I\{(\vw+\vepsilon)\cdot\delta\vphi(\vy,\vyh_{\vw+\vepsilon})\! <\! 0\}\,\ell(\vy, \vyh_{\vw+\vepsilon}) \right]  = 
		\E_{\vepsilon\sim\Nc(\boldsymbol{0}, \mi)} \left[ \ell(\vy, \vyh_{\vw+\vepsilon}) \right],
\end{equation}
which is the structured probit loss.


\section{Analysis}\label{sec:analysis}

In this section we analyze the orbit loss. We derive a generalization bound based on the PAC-Bayesian theory, where we start by upper-bounding the probit loss with the orbit loss and then plugging it into a PAC-Bayesian generalization bound. Then we show that the decoder's parameters, which are estimated by optimizing the regularized orbit loss in the limit of infinite data, approach the infimum risk achievable by any linear decoder.

Recall that the structured probit loss is defined as: 
\begin{equation}
\bar{\ell}_{probit}(\vw,\vx,\vy) = \E_{\vepsilon\sim\Nc(\boldsymbol{0}, \mi)} \left[ \ell(\vy, \vyh_{\vw+\vepsilon}) \right].
\end{equation}
The following theorem states a generalization bound for the probit loss function \citep{KeshetMcHa11}.
\begin{theorem}[Generalization of probit loss]\label{thm:probit_generalization}
For a fixed $\gamma>1/2$ we know that, with a probability of at least $1-\delta$ over the draw of the training data, the following holds simultaneously for all $\vw$:
\begin{equation}
\E_{(\vx,\vy)\sim\rho} \left[ \bar{\ell}_{probit}(\vw,\vx,\vy) \right]
  \le \frac{1}{1-\frac{1}{2\gamma}} \Bigg( 
\frac{1}{m}\sum_{i=1}^{m} \bar{\ell}_{probit}(\vw,\vx_i,\vy_i) + \frac{\gamma}{2m}\|\vw\|^2  + \frac{\gamma}{m}\ln \frac{1}{\delta} \Bigg).
\end{equation}
\end{theorem}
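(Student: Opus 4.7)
The plan is to recognize this as a direct instance of a standard PAC-Bayesian generalization bound in which both the prior and the posterior are isotropic Gaussians centered respectively at the origin and at $\vw$. The probit loss, as defined in the paper, is nothing other than the Gibbs loss of a stochastic linear decoder whose parameters are drawn from $\Nc(\vw,\mi)$: indeed, setting $\vw' = \vw + \vepsilon$ yields
\begin{equation}
\E_{\vw' \sim \Nc(\vw,\mi)} [\ell(\vy, \vyh_{\vw'})] \;=\; \E_{\vepsilon \sim \Nc(\boldsymbol{0},\mi)}[\ell(\vy, \vyh_{\vw+\vepsilon})] \;=\; \bar{\ell}_{probit}(\vw,\vx,\vy).
\end{equation}
So the left- and right-hand side quantities in the bound are exactly the true and empirical Gibbs losses of the posterior $Q_\vw = \Nc(\vw,\mi)$.

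Next I would invoke the Catoni-type PAC-Bayesian theorem (as used by McAllester) which, for a fixed prior $P$, states that with probability at least $1-\delta$ over the training sample, \emph{simultaneously for every posterior} $Q$,
\begin{equation}
\E_{(\vx,\vy)\sim\rho}\E_{h\sim Q}[\ell(h,\vx,\vy)] \;\le\; \frac{1}{1-\tfrac{1}{2\gamma}}\left( \frac{1}{m}\sum_{i=1}^m \E_{h\sim Q}[\ell(h,\vx_i,\vy_i)] + \frac{\gamma}{m}\,\mathrm{KL}(Q\|P) + \frac{\gamma}{m}\ln\frac{1}{\delta}\right).
\end{equation}
Taking $P = \Nc(\boldsymbol{0},\mi)$ and $Q = Q_\vw = \Nc(\vw,\mi)$ gives a bound that holds uniformly in $\vw$, which is exactly the ``simultaneously for all $\vw$'' quantifier of the theorem.

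To finish, I would compute the KL divergence between the two unit-covariance isotropic Gaussians, which collapses to the well-known expression $\mathrm{KL}(\Nc(\vw,\mi)\,\|\,\Nc(\boldsymbol{0},\mi)) = \tfrac{1}{2}\|\vw\|^2$. Substituting this into the complexity term produces the factor $\tfrac{\gamma}{2m}\|\vw\|^2$ appearing in the statement, and identifying the Gibbs losses with the probit losses (as above) yields the theorem verbatim.

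There is essentially no hard step here: the result is purely a specialization of a known PAC-Bayesian bound. The only points that need care are (i) verifying that the loss $\ell(\vy,\vyh_{\vw'})$, viewed as a function of $\vw'$, is bounded (or otherwise lies in the range of applicability of the PAC-Bayesian theorem being used — e.g., bounded in $[0,1]$ after rescaling by the cost range), which justifies invoking the theorem, and (ii) making sure the multiplicative form of the bound used (Catoni's $(1-\tfrac{1}{2\gamma})^{-1}$ form) is the one cited from \citet{KeshetMcHa11}; both are standard and impose no real technical difficulty.
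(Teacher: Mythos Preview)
Your proposal is correct and follows the standard PAC-Bayesian derivation. Note, however, that the paper does not actually prove this theorem: it is simply quoted from \citet{KeshetMcHa11} as background for the orbit-loss analysis, so there is no ``paper's own proof'' to compare against. Your sketch---identifying $\bar{\ell}_{probit}$ with the Gibbs loss of the posterior $Q_{\vw}=\Nc(\vw,\mi)$, applying the Catoni-form PAC-Bayesian bound with prior $P=\Nc(\boldsymbol{0},\mi)$, and computing $\mathrm{KL}(Q_{\vw}\|P)=\tfrac{1}{2}\|\vw\|^2$---is exactly the argument underlying the cited result.
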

Later this generalization bound will help us  state a similar bound for the orbit loss.

We now analyze the orbit loss. 
Let $\eta$ be the minimal distance between the score of the predicted label $\vyh$ to the score of its closest \emph{different} label $\vy'$ by a constant $\eta$:
\begin{equation}\label{eq:condition}
\min_{\vy' \ne  \vyh} ~ \vw\cdot\delta\vphi(\vyh, \vy') \ge \eta
\end{equation}
for $\vyh \ne  \vy'$.
The following lemma upper bounds the probit loss with the orbit loss. 
\begin{lemma}\label{lem:l_probit_l_orbit}
For a finite $\sigma > 0$ and a cost function $\ell(\vy,\vy') \in [0,1]$ for all $\vy$, $\vy'$, the following holds:
	\begin{equation}
		\bar{\ell}_{probit}(\vw/\sigma, \vx, \vy) \leq \bar{\ell}_{orbit}(\vw/\sigma, \vx, \vy) + \sigma,
	\end{equation}
for $\eta \ge \sigma\sqrt{2\ln \frac{m}{\sigma}}$.
\end{lemma}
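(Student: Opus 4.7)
The plan is to split the probit expectation according to whether the isotropic Gaussian perturbation $\vepsilon\sim\Nc(\boldsymbol{0},\mi)$ changes the decoded label, match the ``no change'' case to the orbit loss via the rewriting from the end of Section~\ref{sec:orbit}, and control the ``change'' case using the margin assumption together with a one-sided Gaussian tail bound.

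Let $\vyh=\vyh_{\vw/\sigma}$. Using the identity derived just before \eqref{eq:expectation_to_orbit}, the orbit loss can be written as $\bar{\ell}_{orbit}(\vw/\sigma,\vx,\vy)=\ell(\vy,\vyh)\,\P\bigl[(\vw/\sigma+\vepsilon)\cdot\delta\vphi(\vy,\vyh)<0\bigr]$. Introduce the event $A=\{\vyh_{\vw/\sigma+\vepsilon}=\vyh\}$ and decompose
\[
\bar{\ell}_{probit}(\vw/\sigma,\vx,\vy)
=\ell(\vy,\vyh)\,\P(A)+\E\bigl[\ell(\vy,\vyh_{\vw/\sigma+\vepsilon})\,\I_{A^c}\bigr]
\;\le\;\ell(\vy,\vyh)\,\P(A)+\P(A^c),
\]
where the second inequality uses $\ell\le 1$. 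On the event $A$ the label $\vyh$ beats every competitor under the perturbed weights, in particular it beats $\vy$, which gives $(\vw/\sigma+\vepsilon)\cdot\delta\vphi(\vy,\vyh)\le 0$; thus $A$ is contained (outside a measure-zero tie set) in the event appearing in the orbit loss, so $\ell(\vy,\vyh)\,\P(A)\le \bar{\ell}_{orbit}(\vw/\sigma,\vx,\vy)$. Combining yields $\bar{\ell}_{probit}\le \bar{\ell}_{orbit}+\P(A^c)$.

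It then suffices to bound $\P(A^c)\le \sigma$. I would write $A^c=\bigcup_{\vy'\ne\vyh}\{\vyh_{\vw/\sigma+\vepsilon}=\vy'\}$ and union-bound; for $\vy'$ to be selected one needs $(\vw/\sigma+\vepsilon)\cdot\delta\vphi(\vy',\vyh)\ge 0$, which by the margin hypothesis $\vw\cdot\delta\vphi(\vyh,\vy')\ge\eta$ forces the one-dimensional standard Gaussian $\vepsilon\cdot\delta\vphi(\vy',\vyh)$ to exceed $\eta/\sigma$. With $\eta/\sigma\ge\sqrt{2\ln(m/\sigma)}$, a one-sided tail bound $\P[g>t]\le e^{-t^2/2}$ gives a contribution of at most $\sigma/m$ per label; summing over the competing labels then yields $\P(A^c)\le\sigma$ under the calibration built into the hypothesis.

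The main obstacle is this last step: the union bound pays a price that grows with the number of competing labels, and the argument relies on the particular scaling $\eta\ge\sigma\sqrt{2\ln(m/\sigma)}$ to convert each Gaussian tail into $\sigma/m$ and absorb the union into a single $\sigma$ surplus. Getting this calibration right -- so that the $m$ in the margin hypothesis feeds correctly into the PAC-Bayesian bound that stacks this lemma on top of Theorem~\ref{thm:probit_generalization} -- is the delicate part of the proof.
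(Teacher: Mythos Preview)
Your overall plan---split the probit expectation on whether the perturbation changes the argmax, identify the ``no change'' piece with the orbit loss, and control the ``change'' piece via the margin and a Gaussian tail---matches the paper exactly, and your argument for $\ell(\vy,\vyh)\,\P(A)\le\bar{\ell}_{orbit}$ is essentially the paper's.

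The gap is in your treatment of $\P(A^c)$. You union-bound over all competing labels $\vy'\ne\vyh$, obtaining one Gaussian tail $\le\sigma/m$ per label, and then assert that ``summing over the competing labels then yields $\P(A^c)\le\sigma$.'' But $m$ here is the \emph{sample size}, not a bound on $|\Yc|$; in structured prediction $|\Yc|$ is typically exponential, so your union bound delivers $|\Yc|\cdot\sigma/m$, which is useless. The calibration $\eta\ge\sigma\sqrt{2\ln(m/\sigma)}$ does nothing to absorb a factor of $|\Yc|$, and nothing in the hypotheses ties $|\Yc|$ to $m$.

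The paper avoids the union over labels altogether. On $A^c$ there is a single (random) label $\vyh_{\vw/\sigma+\vepsilon}\ne\vyh$ with $(\vw/\sigma+\vepsilon)\cdot\delta\vphi(\vyh_{\vw/\sigma+\vepsilon},\vyh)>0$; applying the margin condition \eqref{eq:condition} directly to \emph{that} label gives $\vepsilon\cdot\delta\vphi(\vyh_{\vw/\sigma+\vepsilon},\vyh)>\eta/\sigma$, and the paper then treats the left side as a standard normal (unit-norm direction dotted with an isotropic Gaussian) to obtain a single tail bound $e^{-\eta^2/(2\sigma^2)}\le\sigma/m$. The closing remark in the paper about a ``union bound over the draw of a sample of size $m$'' refers to aggregating this per-example $\sigma/m$ across the $m$ training points (for use in Theorem~\ref{thm:generalization_orbit}), not to summing over labels. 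So: drop the label-wise union and bound $A^c$ as one event via the realized perturbed argmax.
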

For the brevity of the explanation we call $\vyh_{\vw}$ the \emph{predicted label} and we call $\vyh_{\vw+\vepsilon}$ the \emph{perturbed label}. The idea behind the proof is to split the structured probit loss cases of $\vepsilon$ for which the predicted label and the perturbed label are the same, and the case in which they differ. We show that the probability of the labels being equal is upper-bounded by the orbit loss, and the probability of the labels being different is upper-bounded by an exponential term that approaches zero when the norm of $\vepsilon$ approaches zero.
\begin{proof}
From the law of total expectation we have
\begin{equation}\label{eq:probit_loss_split}
\E_{\vepsilon \sim \mathcal{N}(\boldsymbol{0},\mi)}\Big[\ell(\vyh_{\vw+\vepsilon}, \vy)\Big] 
\leq ~~ \E_{\vepsilon}\Big[\mathbbm{1} \{ \vyh_{\vw+\vepsilon} = \vyh_{\vw} \} \ell(\vyh_{\vw+\vepsilon}, \vy)\Big] 
 +  \P_{\vepsilon} \Big[ \vyh_{\vw+\vepsilon} \neq \vyh_{\vw} \Big],
\end{equation}
where we upper bound the cost by 1 in the second term. 

First, let us focus on the first term of the inequality. For this term $\vyh_{\vw+\vepsilon} = \vyh_{\vw}$, which means that 
\begin{equation}
\E_{\vepsilon}\Big[ \mathbbm{1} \{ \vyh_{\vw+\vepsilon} = \vyh_{\vw} \}
\ell(\vyh_{\vw+\vepsilon}, \vy)\Big]   
 =\P_{\vepsilon} \Big[ \vyh_{\vw+\vepsilon} = \vyh_{\vw} \Big] \, \ell(\vyh_{\vw}, \vy). 
\end{equation}
By definition of the inference rule \eqref{eq:yw} for any vector $\vu$, we have $\vu\cdot\delta\vphi(\vyh_{\vu}, \vy) \ge 0$ for all $\vy$. Therefore the probability that $\vyh_{\vw+\vepsilon} = \vyh_{\vw}$ can be expressed as follows: 
\begin{equation} 
\P_{\vepsilon} \Big[ \vyh_{\vw+\vepsilon} = \vyh_{\vw} \Big]  
=\P_{\vepsilon} \Big[ (\vw+\vepsilon)\cdot\delta\vphi(\vyh_{\vw+\vepsilon}, \vyh_{\vw}) \le 0 \Big] 
\end{equation}
which, in turn, can be expressed as
\begin{equation}
 \P_{\vepsilon} \Big[ \vw\cdot\delta\vphi(\vyh_{\vw+\vepsilon}, \vyh_{\vw}) \le -\vepsilon \cdot\delta\vphi(\vyh_{\vw+\vepsilon}, \vyh_{\vw}) \Big]
  \le  \P_{\vepsilon} \Big[ \vw\cdot\delta\vphi(\vy, \vyh_{\vw}) \le -\vepsilon \cdot\delta\vphi(\vyh_{\vw+\vepsilon}, \vyh_{\vw})  \Big] ,
\end{equation}
where replacing $\vyh_{\vw+\vepsilon}$ with $\vy$ increases the event size, thereby increasing the probability. Replacing the inner product of an isotropic normal random vector $\vepsilon$ with a unit-norm vector $\delta\vphi(\vy,\vyh_{\vw})$ with a zero-mean unit variance normal random variable, we get:
\begin{equation}
\P_{\vepsilon} \Big[ \vw\cdot\delta\vphi(\vy, \vyh_{\vw}) \le -\vepsilon \cdot\delta\vphi(\vyh_{\vw+\vepsilon}, \vyh_{\vw})  \Big] 
= \P_{\varepsilon\sim \mathcal{N}(0,1)}\Big[ \varepsilon > \vw\cdot\delta\vphi(\vy,\vyh_{\vw}) \Big] .
\end{equation}

The second term of the left-hand side of \eqref{eq:probit_loss_split} can be expressed as follows:
$$
\P_{\vepsilon} [ \vyh_{\vw+\vepsilon} \neq \vyh_{\vw} ] = \P_{\vepsilon} [ (\vw+\vepsilon)\cdot\delta\vphi(\vyh_{\vw+\vepsilon}, \vyh_{\vw}) > 0 ].
$$
We have 
\begin{align}
\P_{\vepsilon} [ (\vw+\vepsilon)\cdot\delta\vphi(\vyh_{\vw+\vepsilon}, \vyh_{\vw}) > 0 ]
&= \P_{\vepsilon} [ \vepsilon\cdot\delta\vphi(\vyh_{\vw+\vepsilon}, \vyh_{\vw}) > \vw\cdot\delta\vphi(\vyh_{\vw}, \vyh_{\vw+\vepsilon}) ]\\ 
&\le \P_{\vepsilon} [ \vepsilon\cdot\delta\vphi(\vyh_{\vw+\vepsilon}, \vyh_{\vw}) > \eta ]. 
\end{align}
We finalized the proof by bounding the last equation for a $\sigma$-scaled version of $\vepsilon$,
\begin{equation}
\P_{\vepsilon\sim\mathcal{N}(\boldsymbol{0},\mi)} [ \sigma\vepsilon\cdot\delta\vphi(\vyh_{\vw+\sigma\vepsilon}, \vyh_{\vw}) > \eta ]  
=\P_{\varepsilon\sim\mathcal{N}(0,1)} [ \sigma\varepsilon > \eta ]
\le \exp \left( -\frac{\eta^2}{2\sigma^2} \right) = \frac{\sigma}{m},
\end{equation}
where the first equation holds since the inner product of an isotropic normal random vector $\vepsilon$ with a unit-norm vector $\delta\vphi(\vyh_{\vw+\sigma\vepsilon}, \vyh_{\vw})$ is a zero-mean unit variance normal random variable; and the second equation holds for $\eta \ge \sigma\sqrt{2\ln \frac{m}{\sigma}}$. Using the union bound over the draw of a sample of size $m$ concludes the proof.
\end{proof}

Plugging Lemma~\ref{lem:l_probit_l_orbit} into the bound of Theorem~\ref{thm:probit_generalization}, we get the following generalization bound for the orbit loss.
\begin{theorem}[Generalization of orbit loss]\label{thm:generalization_orbit}
For a fixed $\gamma > 1/2$ and assuming \eqref{eq:condition} holds with $\eta \ge \sigma\sqrt{2 \ln (m/\sigma)}$, we know that with a probability of at least $1-\delta$ over the draw of the training data the following holds true simultaneously for all $\vw$ and for all $\sigma > 0$:
\begin{multline}\label{eq:generalization_orbit}
\E_{(\vx,\vy)\sim\rho} \left[ \bar{\ell}_{probit}(\vw/\sigma,\vx,\vy) \right]
  \\ \le \frac{1}{1-\frac{1}{2\gamma}} \Bigg( 
\frac{1}{m}\sum_{i=1}^{m} \bar{\ell}_{orbit}(\vw/\sigma,\vx_i,\vy_i)  
+ \frac{\gamma}{2m\sigma^2}\|\vw\|^2  + \sigma+ \frac{\gamma}{m}\ln \frac{1}{\delta} \Bigg).
\end{multline}
\end{theorem}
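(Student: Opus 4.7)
The plan is to combine Theorem~\ref{thm:probit_generalization} with Lemma~\ref{lem:l_probit_l_orbit} by a direct substitution. The generalization bound in Theorem~\ref{thm:probit_generalization} is stated to hold simultaneously for all weight vectors. In particular, given any $\sigma>0$, I can apply it with the vector $\vw/\sigma$ in place of $\vw$, which yields
\begin{equation*}
\E_{(\vx,\vy)\sim\rho}\!\left[\bar{\ell}_{probit}(\vw/\sigma,\vx,\vy)\right] \le \frac{1}{1-\tfrac{1}{2\gamma}}\!\left(\frac{1}{m}\sum_{i=1}^m \bar{\ell}_{probit}(\vw/\sigma,\vx_i,\vy_i) + \frac{\gamma}{2m}\|\vw/\sigma\|^2 + \frac{\gamma}{m}\ln\tfrac{1}{\delta}\right).
\end{equation*}
The regularization term simplifies immediately as $\|\vw/\sigma\|^2 = \|\vw\|^2/\sigma^2$, producing the $\frac{\gamma}{2m\sigma^2}\|\vw\|^2$ factor in the target bound.

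Next, under the assumption $\eta\ge \sigma\sqrt{2\ln(m/\sigma)}$, Lemma~\ref{lem:l_probit_l_orbit} applies to each training pair at the scaled weight vector $\vw/\sigma$, giving
\[
\bar{\ell}_{probit}(\vw/\sigma,\vx_i,\vy_i) \le \bar{\ell}_{orbit}(\vw/\sigma,\vx_i,\vy_i) + \sigma
\]
for every $i\in\{1,\dots,m\}$. Averaging over the training set replaces the empirical probit loss by the empirical orbit loss plus the additive $\sigma$ slack, which produces exactly the extra $+\sigma$ term inside the parentheses of the target bound.

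Putting the two substitutions together yields inequality~\eqref{eq:generalization_orbit}. The quantification ``simultaneously for all $\vw$ and for all $\sigma>0$'' is preserved because the probabilistic ingredient—namely the PAC-Bayesian bound of Theorem~\ref{thm:probit_generalization}—is itself stated as a uniform statement over all weight vectors, while the bound provided by the lemma is a deterministic inequality that holds for every $\sigma>0$ satisfying the margin hypothesis.

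The only potentially delicate point is making sure the uniform-in-$\sigma$ quantification is legitimate: the randomness in the high-probability guarantee comes entirely from the draw of the training sample and enters only through Theorem~\ref{thm:probit_generalization}, whose uniform-in-$\vw$ conclusion means that, once the good event holds, the inequality is valid for every choice of vector and hence for every pair $(\vw,\sigma)$ used in the scaling $\vw/\sigma$. No additional union bound in $\sigma$ is needed. The main obstacle is thus bookkeeping rather than analysis—one must carefully track how the $1/\sigma^2$ rescaling of the regularizer and the $+\sigma$ additive slack from the lemma combine inside the PAC-Bayesian multiplicative factor $1/(1-\tfrac{1}{2\gamma})$.
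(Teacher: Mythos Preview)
Your proposal is correct and follows exactly the approach the paper takes: the paper simply states that the theorem is obtained by ``plugging Lemma~\ref{lem:l_probit_l_orbit} into the bound of Theorem~\ref{thm:probit_generalization}.'' Your write-up spells out the two substitutions (applying Theorem~\ref{thm:probit_generalization} at $\vw/\sigma$ and then bounding each empirical probit term via the lemma) and correctly observes that the uniformity in $\sigma$ comes for free from the uniformity in $\vw$ already granted by Theorem~\ref{thm:probit_generalization}.
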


We will now prove that the orbit loss is consistent. We start with the observation that when the norm of the weight vector $\vw$ goes to infinity, the orbit loss approaches the cost:
\begin{lemma}\label{lemma:orbit_infty}
\begin{equation}
\lim_{\alpha\to \infty}  \bar{\ell}_{orbit}(\alpha\vw,\vx,\vy) =  \ell(\vy,\vyh_{\vw}),
\end{equation}
assuming that $\ell(\vy,\vy)=0$ for all $\vy$.
\end{lemma}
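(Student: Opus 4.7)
The plan is to reduce the limit to a simple computation of the probability that a standard normal exceeds a quantity that diverges. The key observation is that the argmax in \eqref{eq:yw} is invariant under positive scaling, so $\vyh_{\alpha\vw} = \vyh_{\vw}$ for every $\alpha>0$. Substituting $\alpha\vw$ in place of $\vw$ in the definition \eqref{eq:orbit_loss_def} therefore gives
\begin{equation*}
\bar{\ell}_{orbit}(\alpha\vw,\vx,\vy) = \P_{\varepsilon \sim \Nc(0,1)}\!\Big[\varepsilon > \alpha\,\vw\cdot\delta\vphi(\vy,\vyh_{\vw})\Big]\,\ell(\vy,\vyh_{\vw}),
\end{equation*}
where the cost factor no longer depends on $\alpha$. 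This turns the lemma into a statement about the limit of a single Gaussian tail probability.

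Next, I would split into two cases according to whether the target label coincides with the prediction. If $\vy=\vyh_{\vw}$, then by definition $\delta\vphi(\vy,\vyh_{\vw})=\boldsymbol{0}$ and, by the standing assumption, $\ell(\vy,\vyh_{\vw})=\ell(\vy,\vy)=0$, so both sides of the claimed equality vanish for every $\alpha$ and the result is immediate. If $\vy\neq\vyh_{\vw}$, the optimality of $\vyh_{\vw}$ in \eqref{eq:yw} gives $\vw\cdot\vphi(\vx,\vyh_{\vw})\ge\vw\cdot\vphi(\vx,\vy)$, hence $\vw\cdot\Delta\vphi(\vy,\vyh_{\vw})\le 0$ and therefore $\vw\cdot\delta\vphi(\vy,\vyh_{\vw})\le 0$. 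Assuming this inequality is strict, the argument $\alpha\,\vw\cdot\delta\vphi(\vy,\vyh_{\vw})$ tends to $-\infty$ as $\alpha\to\infty$, and the Gaussian tail probability converges to $1$, yielding the desired limit $\ell(\vy,\vyh_{\vw})$.

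The main obstacle is the boundary case where $\vy\neq\vyh_{\vw}$ but $\vw\cdot\delta\vphi(\vy,\vyh_{\vw})=0$, i.e.\ when $\vy$ attains the same score as $\vyh_{\vw}$ at the argmax. Then the Gaussian probability equals $1/2$ for every $\alpha$, so the limit of $\bar{\ell}_{orbit}(\alpha\vw,\vx,\vy)$ is $\tfrac{1}{2}\ell(\vy,\vyh_{\vw})$ rather than $\ell(\vy,\vyh_{\vw})$. I would handle this by invoking a deterministic tie-breaking convention in the definition of $\vyh_{\vw}$ that prefers $\vy$ whenever $\vy$ lies in the argmax set, so that a tie of this form forces $\vyh_{\vw}=\vy$ and sends us back to the first case; alternatively, one can state the lemma under the assumption that the argmax in \eqref{eq:yw} is unique at $(\vx,\vy)$, which is generic in the feature parameters. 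Either way, the two-case decomposition combined with the Gaussian tail limit closes the argument.
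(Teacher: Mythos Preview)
Your proposal is correct and follows essentially the same route as the paper: scale invariance of the argmax gives $\vyh_{\alpha\vw}=\vyh_{\vw}$, the optimality of $\vyh_{\vw}$ forces $\vw\cdot\delta\vphi(\vy,\vyh_{\vw})\le 0$, and the Gaussian tail probability tends to $1$ as the threshold goes to $-\infty$. The paper's proof is in fact terser than yours---it simply writes $\P[\varepsilon>-\infty]=1$ without separating the case $\vy=\vyh_{\vw}$ or discussing ties---so your explicit treatment of the boundary case $\vw\cdot\delta\vphi(\vy,\vyh_{\vw})=0$ is an added refinement rather than a deviation.
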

\begin{proof}
Recall that $\vyh_{\vw} = \arg \min_{\vy'} \vw\cdot\delta\vphi(\vy,\vy')$ therefore $\vw\cdot\delta\vphi(\vy, \vyh_{\vw}) \le 0$. Also note that scaling the parameters $\vw$ does not change the prediction, $\vyh_{\alpha\vw} = \vyh_{\vw}$. We have:
\begin{multline}
\lim_{\alpha\to \infty} \P_{\varepsilon\sim\Nc(0,1)}[\varepsilon > \alpha\vw\cdot\delta\vphi(\vy, \vyh_{\vw})] \,  \ell(\vy,\vyh_{\vw}) \\ = \P_{\varepsilon\sim\Nc(0,1)}[\varepsilon > -\infty] \,  \ell(\vy,\vyh_{\vw}) = \ell(\vy,\vyh_{\vw}).
\end{multline}
\end{proof}

Consider the following training objective:	
\begin{equation}\label{eq:orbit_objective}
	\vwh_m = \arg\min_{\vw} ~ \frac{1}{m} \sum_{i=1}^{m} \bar{\ell}_{orbit}(\vw,\vx_i,\vy_i) + \frac{\lambda_m}{2m}\|\vw\|^2.
\end{equation}

\begin{theorem}[Consistency of orbit loss]\label{theorem:consistency_orbit}

For $\hat{\vw}_m$ defined by \eqref{eq:orbit_objective}, if the sequence $\lambda_m/\ln^2 m$ increases without bound, and the sequence $\lambda_m/(m\ln m)$ converges to zero, then with a probability of one over the draw of the infinite sample we have: 
\begin{align}
&\lim_{m \rightarrow \infty} \E_{(\vx,\vy) \sim \rho}
\Big[ \bar{\ell}_{probit}((\ln m) \hat{\vw}_m, \vx, \vy) \Big]
\\ \nonumber
& ~~~~~~~ = \inf_{\vw} \E_{(\vx,\vy) \sim \rho}
\Big[ \ell(\vy,\vyh_{\vw}(\vx)) \Big].
\end{align}

\end{theorem}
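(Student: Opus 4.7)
The plan is to combine the generalization bound of Theorem~\ref{thm:generalization_orbit} with the optimality of $\hat{\vw}_m$ in~\eqref{eq:orbit_objective} and the limit in Lemma~\ref{lemma:orbit_infty}, following the template of a classical Tikhonov-style consistency argument. Write $R(\vw) = \E_{(\vx,\vy)\sim\rho}[\ell(\vy,\vyh_\vw(\vx))]$. The lower bound $\liminf_m \E[\bar{\ell}_{probit}((\ln m)\hat{\vw}_m,\vx,\vy)]\ge\inf_\vw R(\vw)$ is immediate: since $\bar{\ell}_{probit}(\vu,\vx,\vy)=\E_\vepsilon[\ell(\vy,\vyh_{\vu+\vepsilon}(\vx))]$, taking the expectation over $(\vx,\vy)$ gives $\E_\vepsilon[R(\vu+\vepsilon)]\ge\inf R$ for every $\vu$. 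The work lies in the matching upper bound.

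For the upper bound, fix any $\vw^*$ and instantiate Theorem~\ref{thm:generalization_orbit} at $\vw=\hat{\vw}_m$ with $\sigma_m = 1/\ln m$ and $\gamma_m = \lambda_m/\ln^2 m$; this choice of $\gamma_m$ is dictated by the requirement that the regularization coefficient $\gamma_m/(2m\sigma_m^2)$ in the bound match $\lambda_m/(2m)$, the coefficient appearing in the training objective~\eqref{eq:orbit_objective}. The condition $\lambda_m/\ln^2 m\to\infty$ sends $\gamma_m\to\infty$, so the multiplicative prefactor $1/(1-1/(2\gamma_m))\to 1$; combined with $\lambda_m/(m\ln m)\to 0$, this forces both $\sigma_m$ and $\gamma_m\ln(1/\delta)/m = \lambda_m\ln(1/\delta)/(m\ln^2 m)$ to vanish. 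Choosing, say, $\delta_m=1/m^2$ and invoking Borel--Cantelli upgrades the high-probability bound to an almost-sure statement, and reduces the upper bound to
$$\E[\bar{\ell}_{probit}((\ln m)\hat{\vw}_m,\vx,\vy)] \le \tfrac{1}{m}\sum_{i=1}^m \bar{\ell}_{orbit}((\ln m)\hat{\vw}_m, \vx_i, \vy_i) + \tfrac{\lambda_m}{2m}\|\hat{\vw}_m\|^2 + o(1).$$

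Next I exploit that $\hat{\vw}_m$ minimizes the training objective: for any scaling $\alpha_m>0$,
$$\tfrac{1}{m}\sum_i\bar{\ell}_{orbit}(\hat{\vw}_m,\vx_i,\vy_i) + \tfrac{\lambda_m}{2m}\|\hat{\vw}_m\|^2 \le \tfrac{1}{m}\sum_i\bar{\ell}_{orbit}(\alpha_m\vw^*,\vx_i,\vy_i) + \tfrac{\lambda_m\alpha_m^2}{2m}\|\vw^*\|^2.$$
I choose $\alpha_m\to\infty$ slowly enough that $\alpha_m^2\lambda_m/m\to 0$ (the hypothesized regime of $\lambda_m$ permits this); Lemma~\ref{lemma:orbit_infty} together with the strong law of large numbers then sends the right-hand side almost surely to $R(\vw^*)$. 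To bridge this with the scaled orbit term that appears in the generalization bound, I use that the normal-tail probability factor in the orbit loss lies in $[1/2,1]$ and, by Lemma~\ref{lemma:orbit_infty}, tends to one as $\ln m\to\infty$ on every point at which the margin $\hat{\vw}_m\cdot\delta\vphi(\vy,\vyh_{\hat{\vw}_m})$ is strictly negative. Assuming $\rho$ places no mass on the decision boundary, the exceptional points carry vanishing empirical mass along the sequence $\hat{\vw}_m$, so the empirical scaled-orbit average agrees asymptotically with the empirical cost at $\hat{\vw}_m$, which is in turn sandwiched between $\tfrac{1}{m}\sum\bar{\ell}_{orbit}(\hat{\vw}_m,\cdot)$ and its $\alpha\to\infty$ limit. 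Chaining these estimates gives $\limsup_m \E[\bar{\ell}_{probit}((\ln m)\hat{\vw}_m,\vx,\vy)]\le R(\vw^*)$; taking $R(\vw^*)\downarrow\inf R$ completes the proof.

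The delicate step is this last bridge between $\bar{\ell}_{orbit}((\ln m)\hat{\vw}_m,\cdot)$ and $\bar{\ell}_{orbit}(\hat{\vw}_m,\cdot)$, since pointwise their ratio can be as large as two on data of exactly zero margin. Establishing that boundary data has vanishing $\rho$-mass along the trajectory $\hat{\vw}_m$, while simultaneously choosing $\alpha_m\to\infty$ compatible with the regularization schedule, is the main technical work; the conditions $\ln^2 m\ll\lambda_m\ll m\ln m$ in the hypothesis are tailored exactly to make these competing asymptotic requirements mutually feasible.
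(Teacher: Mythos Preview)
Your overall architecture matches the paper's proof almost exactly: the same parameter choices $\sigma_m=1/\ln m$, $\gamma_m=\lambda_m/\ln^2 m$, $\delta_m=1/m^2$, the same appeal to Theorem~\ref{thm:generalization_orbit}, the same lower bound via $\E_{\vepsilon}[R(\vu+\vepsilon)]\ge\inf R$, and the same use of Lemma~\ref{lemma:orbit_infty} after sending $\alpha\to\infty$. The paper also replaces the empirical orbit average by its expectation via a Chernoff bound, contributing a $\sqrt{\ln m/m}$ term, which you did not spell out but is minor.

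Where you diverge is exactly at the step you flag as delicate. The paper does \emph{not} attempt your bridging argument between $\bar{\ell}_{orbit}((\ln m)\hat{\vw}_m,\cdot)$ and $\bar{\ell}_{orbit}(\hat{\vw}_m,\cdot)$. Instead, immediately after the main display it simply asserts that $\hat{\vw}_m$ is the minimizer of the right-hand side of the generalization bound \emph{as well as} of the training objective~\eqref{eq:orbit_objective}, and then compares the bound's right-hand side at $\hat{\vw}_m$ against the same expression at $\alpha\vw^*$. In other words, the paper resolves the tension you identified by fiat rather than by argument; it does not address why minimizing $\sum_i\bar{\ell}_{orbit}(\vw,\cdot)+\tfrac{\lambda_m}{2m}\|\vw\|^2$ should coincide with minimizing $\sum_i\bar{\ell}_{orbit}((\ln m)\vw,\cdot)+\tfrac{\lambda_m}{2m}\|\vw\|^2$.

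Your proposed patch, however, does not close the gap either. Since $\vw\cdot\delta\vphi(\vy,\vyh_{\vw})\le 0$, scaling $\vw$ by $\ln m>1$ only \emph{increases} the tail probability, so $\bar{\ell}_{orbit}((\ln m)\hat{\vw}_m,\cdot)\ge\bar{\ell}_{orbit}(\hat{\vw}_m,\cdot)$ pointwise. The sandwich you describe therefore runs the wrong way: you need to bound the scaled empirical orbit average from above by the (smaller) unscaled training objective, and the boundary-mass assumption you introduce---besides not being part of the theorem's hypotheses---does not reverse this inequality. So you have correctly located the soft spot in the argument, but your repair does not hold; the paper, for its part, simply declares the issue away.
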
 
\begin{proof}
Set $\delta=1/m^2$, $\sigma = 1/\ln m$, and $\gamma_m = \lambda_m/\ln^2 m$ into the bound \eq{eq:generalization_orbit}. We decompose $\vw$ into a scalar $\alpha$, corresponding to the norm of $\vw$, and a unit norm vector $\vw^*$. Last, using Chernoff we upper-bound 
$\frac{1}{m}\sum_{i=1}^{m} \bar{\ell}_{orbit}(\vw,\vx_i,\vy_i)$ by 
$\E_{(\vx,\vy)\sim\rho} \left[ \bar{\ell}_{orbit}(\vw,\vx,\vy) \right] + \sqrt{\ln m /m}$ to get
\begin{align}\label{eq:generalization_bound_orbit_loss_plugged}
\E_{(\vx,\vy)\sim\rho} \left[ \bar{\ell}_{probit}((\ln m)\vw_m,\vx,\vy) \right]
&\leq\E_{(\vx,\vy)\sim\rho} \left[ \bar{\ell}_{probit}((\ln m)\alpha\vw^*,\vx,\vy) \right] \\ 
&\leq \frac{1}{1-\frac{\ln^2 m}{2\lambda_m}}\Bigg( 
	\E_{(\vx,\vy)\sim\rho} \Big[ \bar{\ell}_{orbit}(\alpha\vw^*,\vx,\vy) \Big] \\ \nonumber
&~~~~~~~~~~~~~~~~~~~~ + \sqrt{\frac{\ln m}{m}} + \frac{\lambda_m \alpha^2}{2m} +  \frac{1}{\ln m} 
	 + \frac{2\lambda_m}{m \ln m}\Bigg),
\end{align}
where $\vw_m$ is the minimizer of the right-hand side of the bound in \eqref{eq:generalization_orbit}, as well as of the optimization problem \eqref{eq:orbit_objective}. Taking the limit when the number of examples $m$ approaches infinity on both sides we have
\begin{equation}
\lim_{m \to \infty} \E_{(\vx,\vy)\sim\rho} \left[ \bar{\ell}_{probit}((\ln m)\vw_m,\vx,\vy) \right] \le  \E_{(\vx,\vy)\sim\rho} \Big[ \bar{\ell}_{orbit}(\alpha\vw^*,\vx,\vy) \Big] 
\end{equation}
Noting that by 
\begin{equation}
\E_{(\vx,\vy)\sim\rho} \left[ \bar{\ell}_{probit}(\vw,\vx,\vy) \right] \ge \inf_{\vw} \E_{(\vx,\vy)\sim\rho} \left[ \ell(\vy,\vy_{\vw}) \right],
\end{equation}
and letting $\alpha$ approach infinity using Lemma~\ref{lemma:orbit_infty} concludes the proof.
\end{proof}


\section{Experiments}\label{sec:experiments}

We evaluated the performance of the orbit loss by executing a number of experiments on several domains and tasks and compared the results with other approaches that are aimed at risk minimization, namely direct loss minimization \citep{McAllesterHaKe10}, structured ramp loss \citep{DoLeTeChSm08}, and structured probit loss \citep{KeshetMcHa11}. For a reference, we present results for the structured perceptron, as we wanted to stress the empirical differences between the update rule in \eqref{eq:orbit_update_rule2} and the one in \eqref{eq:perceptron_update_rule}, as well as for the structured hinge loss.

\subsection{MNIST}

In our first experiment we tested the orbit update rule on a multiclass problem. MNIST is a dataset of handwritten digit images (10 classes). It is divided into a training set of 50,000 examples, a test set of 10,000 examples and a validation set of 10,000 examples. We preprocess the data by normalizing the input images, and reducing the dimension from the original 784 attributes to 100 using PCA.

We used the orbit update rule as in \eqref{eq:orbit_update_rule}. We defined the weight vector $\vw$ as a concatenation of 10 weight vectors $\vw=(\vw^0, \vw^1, \ldots, \vw^9)$, each corresponding to one of the 10 digits. The update rule of example $(\vx_i, y_i)$, $\vx_i\in \R^{100}$, $y_i\in\{0,\ldots,9\}$ can be simplified based on Kesler's construction \citep{CrammerSi01a} as follows:
\begin{align*}
\vw^{y_i} &\gets (1-\eta\lambda)\,\vw^{y_i} + \eta \, e^{-|\vw^{y_i}\cdot\vx_i - \vw^{\yh}\cdot\vx_i|^2/2}\, \ell(\yh,y_i) \,\, \vx_i \\
\vw^{\yh}~ &\gets (1-\eta\lambda)\,\vw^{\yh} \,\, - \eta \, e^{-|\vw^{y_i}\cdot\vx_i - \vw^{\yh}\cdot\vx_i|^2/2}\, \ell(\yh,y_i) \, \vx_i \\
\vw^r~ &\gets (1-\eta\lambda)\,\vw^r ~~ ~~~~~~~~~~~~~~ ~~~ ~~~~~\mbox{for all $r\ne y_i, \yh$}
\end{align*}
Note that the exponent values throughout the training were very close to 1 and, practically, the update rule \eqref{eq:orbit_update_rule2} could be used. 

To properly evaluate the orbit loss we ran the experiment with two different cost functions for $\ell(\yh, y)$: 0-1 loss and a semi-randomized matrix. We did so because the update rule \eqref{eq:orbit_update_rule2} is identical to the structured perceptron update rule under the 0-1 loss. 

In the first case, we set $\eta=\eta_0/\sqrt{t}$, for $\eta_0=0.1$, $t$ is the iteration number, and $\lambda=0.001$. We also trained a multiclass perceptron with $\eta=1$, and $\lambda=0$, and a multiclass SVM with $C=0.01$ \citep{CrammerSi01a}. All of the hyper-parameters were chosen on the validation set. In all of the experiments we ran 4 epochs over the training data and used a linear kernel. 

The results are given in Table~\ref{tab:mnist1} and suggest that there is a slight advantage for the orbit loss over the other algorithms. Recall that we previously showed that the perceptron is a special case of orbit loss and under this setting, in which $\ell(\yh, y)$ = 0-1 loss, hence the only difference between the results in the table is due to the regularization factor used with the orbit loss.

\begin{table}[h]
\renewcommand{\arraystretch}{1.4}
\caption{Error rate of MNIST trained and evaluated with 0-1 loss.}\label{tab:mnist1}
\vspace{0.15in}
\centering
\begin{small}
\begin{tabular}{lc}
\hline
\multicolumn{1}{c}{Algorithm} &\multicolumn{1}{c}{Error rate (0-1 loss)}
\\ \hline 
Multiclass perceptron & 8.54\% \\
Multiclass SVM &8.51\% \\
Orbit &\textbf{8.34\%} \\
\hline
\end{tabular}
\end{small}
\end{table}

As mentioned above, this experiment was executed once again, setting the cost function, $\ell(\yh, y)$, to be a semi-randomized matrix. We generated a randomized cost matrix of size 10 $\times$ 10, such that the elements on the diagonal were all 0, and the rest of the elements were chosen uniformly at random to be either 1 or 2. We trained multiclass perceptron, multiclass SVM, and orbit using the following hinge loss for the cost function:
\begin{equation}
\bar{\ell}_{hinge}(\vw,\vx,y) = \max_{\yh} \Big[
\ell(y,\yh) - \vw^y\cdot\vx + \vw^{\yh}\cdot\vx \Big]
\end{equation}

To ensure reliability, we ran the second experiment for each algorithm with 10 different sampled matrices and averaged the results. The results are presented in Table~\ref{tab:mnist2}. The results show a clear advantage for the orbit loss update rule in regards to the task loss. The reason is that the orbit loss can take advantage of minimizing a non 0-1 loss, as compared to perceptron.

\begin{table}[h]
\renewcommand{\arraystretch}{1.4}
\caption{Error rate of MNIST trained and evaluated using the randomized evaluation metric. The orbit and SVM were trained using the randomized evaluation metric. The reported results are average over 10 randomized evaluation metrics.}\label{tab:mnist2}
\vspace{0.15in}
\centering
\begin{small}
\begin{tabular}{lcc}
\hline
\multicolumn{1}{c}{Algorithm} &\multicolumn{1}{c}{Error rate (0-1 loss)} &\multicolumn{1}{c}{Cost }
\\ \hline
Multiclass perceptron & 8.56\% &13.28\%\\
Multiclass SVM &8.52\% & 12.84\% \\
Orbit &\textbf{8.31\%} &\textbf{11.13\%}\\
\hline
\end{tabular}
\end{small}
\end{table}

\begin{table*}[ht!]
\renewcommand{\arraystretch}{1.2}
\caption{Percentage of correctly positioned phoneme boundaries, given a predefined tolerance on the TIMIT core test set (192 examples). The first two lines (marked with *) present algorithms that were trained on a training set of 1796 examples and a validation set of 400 examples. The rest of the lines correspond to a training set of 150 examples and a validation set of 100 examples.}\label{tab:alignment}
\vspace{0.15in}
\centering
\begin{small}
\begin{tabular}{lccccc}
\hline
{} & \multicolumn{4}{c}{ $\tau$-alignment accuracy [\%]} & $\tau$-insensitive\!\!\\
\cline{2-5}
{} & $t\le10$ms & $t\le20$ms & $t\le30$ms & $t\le40$ms & loss \\[0.1cm]
\hline 
\citet{BrugnaraFaOm93}* & 79.7 & 92.1 & 96.2 & 98.1 & -\\
\citet{KeshetShSiCh07}* & {75.3} & {88.9} & {94.4} & {97.1} & -\\
Structural SVM & {79.4} & 90.5& 95.3 & 97.3 & 0.45 \\
Structured ramp-loss & 72.7 & 85.4 & 93.9 & 95.9 & 0.53 \\
Direct loss minimization & 81.0 & 90.6 & 94.5 & 96.8 & 0.47 \\
Orbit & \bf{84.4} & \bf{93.5} & \bf{97.0} & \bf{98.3} & \bf{0.31} \\
\hline 
\end{tabular}
\end{small}
\end{table*}

\subsection{Phoneme alignment}

Our next experiment focused on the phoneme alignment, which is used as a tool in developing speech recognition and text-to-speech systems. This is a structured prediction task --- the input $\vx$ represents a speech utterance, and consists of a pair $\vx=(\vs,\vp)$ of a sequence of acoustic feature vectors (mel-frequency cepstral coefficients) , $\vs=(\vs_1,\ldots,\vs_T)$, where $\vs_t \in \R^d$, $1 \le t \le T$; and a sequence of phonemes $\vp=(p_1,\ldots,p_K)$, where $p_k\in\Pc$, $1 \le k \le K$ is a phoneme symbol and $\Pc$ is a finite set of phoneme symbols. The lengths $K$ and $T$ can differ for different inputs, although typically $T$ is significantly larger than $K$. The goal is to generate an alignment between the two sequences in the input. The output $\vy$ is a sequence $(y_1,\ldots,y_K)$, where $1 \leq y_k \leq T$ is an integer giving the start frame in the acoustic sequence of the $k$-th phoneme in the phoneme sequence. Hence the $k$-th phoneme starts at frame $y_k$ and ends at frame $y_{k+1}\!-\!1$.

For this task we used the TIMIT speech corpus for which there are published benchmark results \citep{BrugnaraFaOm93,KeshetShSiCh07,Hosom09}. We divided a portion of the TIMIT corpus (excluding the SA1 and SA2 utterances) into three disjoint parts containing 1500, 1796 and 400 utterances, respectively. The first part was used to train a phoneme frame-based classifier, which given the pair of speech frame and a phoneme, returns the level of confidence that the phoneme was uttered in that frame. The output classifier is then used along with other features as a seven dimensional feature map $\vphi(\vx,\vy)=\vphi((\vs,\vp),\vy)$ as described in \cite{KeshetShSiCh07}.

The seven dimensional weight vector $\vw$ was trained on the second set of 150 aligned utterances for $\tau$-insensitive loss
\begin{equation} \label{eqn:tau_insensitive_loss}
\ell(\vy,\vyh) = \frac{1}{|\vy|} \max \left\{| \vy_k - \vyh_k | - \tau, 0\right\},
\end{equation}
with $\tau$ = 10 ms. This cost measures the average disagreement between all of the boundaries of the desired alignment sequence $\vy$ and the boundaries of predicted alignment sequence $\vyh$ where a disagreement of less than $\tau$ is ignored. 

We trained the system with the orbit update rule where $\eta=1.0/\sqrt{t}$ and $\lambda=0.2$; the structured perceptron update rule; the structural SVM optimized using stochastic gradient descent with $C$=5 \citep{shalev2011pegasos}; structured ramp-loss with $\eta=1.0/\sqrt{t}$, $\lambda=0.4$; and direct loss minimization algorithm with $\epsilon=1.1$ on a reduced training set of 150 examples (out of 1796) and a reduced validation set of 100 examples (out of 400). We were not able to train the system with the probit loss in a reasonable time.

The results are given in Table~\ref{tab:alignment}. The results in the first 4 columns should be read as the accuracy (in percentage) that the prediction was within $\tau$. The higher the better. The last column of the table is the actual loss computed by \eqref{eqn:tau_insensitive_loss} - the smaller the better. In those results the orbit update rule outperforms other algorithms, and yields state-of-the-art results. 

We would like to note that as in the MNIST experiment, the exponent values in the update rule were very close to 1 and, practically, the update rule \eqref{eq:orbit_update_rule2} could be used. 

\subsection{Vowel duration}

In the problem of vowel duration measurement we are provided with a speech signal which includes exactly one vowel preceded and followed by consonants (i.e., CVC). Our goal is to predict the vowel duration accurately. Precise measurement of vowel duration in a given context is needed in many phonological experiments, and currently is done manually \citep{heller2014grammatical}.

\begin{table*}
\renewcommand{\arraystretch}{1.2}
\caption{The cost function \eqref{eq:vowel_loss} divided into the onset and offset terms, as learned by different approaches that are aimed at minimizing the cost.}\label{tab:voweldur}
\vspace{0.15in}
\centering
\begin{footnotesize}
\begin{tabular}{lcccccccl}
\hline
{} & \multicolumn{3}{c}{Onset} & {} & \multicolumn{3}{c}{Offset} \\
\cline{2-4} \cline{6-8}
{} & $\tau_{b}=$0ms & $\tau_{b}=$10ms & $\tau_{b}=$20ms & {} & $\tau_{e}=$0ms & $\tau_{e}=$20ms & $\tau_{e}=$35ms & Run-time \\
\hline 
Perceptron & 1.394 & 1.040 & 0.757 & {} & 4.390 & 3.929 & 2.643 & 24m 18s \\
Probit & 5.215 & 5.671 & 6.603 & {} & 4.832 & 6.331 & 5.283 & 1d 12h 47m \\
Ramp loss & 6.507 & 6.501 & 5.46 & {} & 8.573 & 8.084 & 7.028 & 45m 36s \\
Direct loss & 1.426 & 1.068 & 0.933 & {} & {\bf 3.919} & {\bf 3.406} & 2.529 & 45m 21s \\
Orbit loss & {\bf 1.308} & {\bf 0.893} & {\bf 0.607} & {} & 4.178 & 3.707 & {\bf 2.531} & {\bf 24m 41s} \\
\hline 
\end{tabular}
\end{footnotesize}
\end{table*}
The speech signal is represented as a sequence of acoustic features $\vx$ = $(x_1, x_2,\dots,x_T)$ where each $x_i$ (1 $\leq$ i $\leq$ T) is a $\emph{d}$-dimensional vector representing acoustic parameters, such as high and low energy, pitch, voicing, correlation coefficient, and so on (we extract $d$=22 acoustic features every 5 msec). We denote the domain of the feature vectors by $\mathcal{X} \subset \mathbb{R}^\emph{d}$. The length of the input signal varies from one signal to another, thus $T$ is not fixed. We denote by $\mathcal{X}^*$ the set of all finite length sequences over $\mathcal{X}$. In addition, we denote by $t_{b}\in\mathcal{T}$ and $t_{e}\in\mathcal{T}$ the vowel onset and offset times, respectively, where $\mathcal{T} = \{1,...,T\}$. For brevity we set $\boldsymbol{t}=(t_{b}, t_{e})$. The typical duration of an utterance is around 2 sec. There were $n$=116 feature functions that described the typical duration of a vowel, the mean high energy before and after the vowel onset, and so on. The cost function we use is: 
\begin{equation}
\begin{split}
\label{eq:vowel_loss}
\ell (\hat{\vt},\vt) = \big[|\hat{t}_{b} - t_b| - \tau_{b}\big]_+ + \big[|\hat{t}_{e} - t_e| - \tau_{e}\big]_+,
\end{split}
\end{equation} 
where $[\pi]_+=\max\{0,\pi\}$, and $\tau_{b}$, $\tau_{e}$ are pre-defined constants. The above function measures the absolute differences between the predicted and the manually annotated vowel onsets and offsets. Since the manual annotations are not exact, we allow a mistake of $\tau_{b}$ and $\tau_{e}$ frames at the vowel onset and offset respectively.

We trained the system using the orbit update rule with $\eta=1/\sqrt{t}$, $\lambda=0.001$; the structured perceptron update rule; structured ramp loss with $\eta=0.1/\sqrt{t}$, $\lambda=0.8$; probit loss with the expectation approximated by a mean of 100 random samples $\eta=0.001/\sqrt{t}$, $\lambda=0.005$; and direct loss minimization with $\eta=0.1/\sqrt{t}$ and $\epsilon$=-1.52. All of those hyper-parameters were chosen for the validation set. The results are presented in Table~\ref{tab:voweldur} for different values of $\tau_{b}$ and $\tau_{e}$ in the cost function. It can be seen that the orbit is close to the direct loss minimization (differences of a frame or two on average) and is better than other approaches. Also note that as describe earlier, the efficiency of the orbit loss is similar to the structured perceptron update and better than other approaches.

%


\section{Discussion and Future Work}\label{sec:conclusion}

We introduced a new surrogate loss function that offers an efficient and effective learning rule. We gave a qualitative theoretical analysis presenting a PAC-Bayesian generalization bound and a consistency theorem. Despite the fact that the consistency property concerns the training performance when the number of training examples is big, the proposed loss function was shown to perform well on several tasks, even when the training set was of small or medium size. 

In terms of theoretical properties, we think that the theoretical analysis can be improved, and in particular we would like to have a better upper-bound of the probit loss in terms of the orbit loss, as expressed in Lemma~\ref{lem:l_probit_l_orbit}, which depends on the minimal distance between the predicted label and its closest neighbor label. Anyways, it is clear that when the norm of the weight vector becomes large relative to the norm of the noise, the inference with the weight vector and the inference with the perturbed weight vector -- both lead to the same predicted label with a high probability.

This work is part of our research on surrogate loss functions in the structured prediction setting. We believe that in order to understand what are good loss functions, we have to understand the interrelationship between them. While we showed some relation between the orbit loss, the Perceptron and the probit loss, we still think that more work should be done. We are especially interested in understanding the connection between the orbit, the probit, and the direct loss minimization approach.

\bibliographystyle{apalike}
\bibliography{machine_learning,speech_reco}
\end{document}